\newtheorem{theorem}{Theorem}
\newcolumntype{L}[1]{>{\raggedright\let\newline\\\arraybackslash\hspace{0pt}}m{#1}}
\newcolumntype{C}[1]{>{\centering\let\newline\\\arraybackslash\hspace{0pt}}m{#1}}
\newcolumntype{R}[1]{>{\raggedleft\let\newline\\\arraybackslash\hspace{0pt}}m{#1}}
\title{Autoregressive Policy Optimization for Constrained Allocation Tasks}
\author{%
  David Winkel\thanks{Both authors contributed equally.} \quad Niklas Strauß$^*$ \\
  \textbf{Maximilian Bernhard} \quad  \textbf{Zongyue Li} \quad \textbf{Thomas Seidl} \quad \textbf{Matthias Schubert} \\
  Munich Center for Machine Learning, LMU Munich\\
  \texttt{\{winkel,strauss,bernhard,li,seidl,schubert\}@dbs.ifi.lmu.de} \\
}
\begin{document}

\maketitle

\begin{abstract}    

Allocation tasks represent a class of problems where a limited amount of resources must be allocated to a set of entities at each time step. Prominent examples of this task include portfolio optimization or distributing computational workloads across servers.
Allocation tasks are typically bound by linear constraints describing practical requirements that have to be strictly fulfilled at all times. In portfolio optimization, for example, investors may be obligated to allocate less than 30\% of the funds into a certain industrial sector in any investment period. 
Such constraints restrict the action space of allowed allocations in intricate ways, which makes learning a policy that avoids constraint violations difficult.
In this paper, we propose a new method for constrained allocation tasks based on an autoregressive process to sequentially sample allocations for each entity. In addition, we introduce a novel de-biasing mechanism to counter the initial bias caused by sequential sampling. We demonstrate the superior performance of our approach compared to a variety of \ac{CRL} methods on three distinct constrained allocation tasks: portfolio optimization, computational workload distribution, and a synthetic allocation benchmark. Our code is available at: \url{https://github.com/niklasdbs/paspo}.

\end{abstract}

\section{Introduction}
Continuous allocation tasks are a class of problems where an agent needs to distribute a limited amount of resources over a set of entities at each time step.
Many complex real-world problems are formulated as allocation tasks, and state-of-the-art solutions rely on using \ac{RL} to learn effective policies~\cite{bhatia2019resource,tian2022prescriptive,ale2022d3pg,sanket2020solving,winkel2023constrained}.
Notable examples include portfolio allocation tasks, where portfolio managers must allocate the available financial resources among various assets~\cite{winkel2023constrained}, or allocation tasks of computational workloads to a set of compute instances in data centers~\cite{ale2022d3pg}.
In many cases, allocation tasks come with allocation constraints~\cite{bhatia2019resource,sanket2020solving,winkel2023constrained,tian2022prescriptive}, such as investing at most 30~\% of the portfolio into a specific subset of the assets or to restrict the maximum workload to certain servers in a data center. Formally, allocation constraints are expressed as linear constraints and form a system of linear inequalities, geometrically describing a convex polytope. 
Each point in this polytope describes a possible allocation and each dimension corresponds to one of the entities. 
Allocation tasks often require hard constraints, i.e., constraints that are explicitly given and must be satisfied at any point in time.
However, most of the existing \ac{CRL} literature focuses on soft constraints that are not explicitly given~\cite{achiam2017constrained,yang2022constrained,zhang2022penalized,liu2020ipo,tessler2018reward}. These approaches typically cannot guarantee constraint satisfaction and tend to have many constraint violations during training. The majority of these methods approximate the cumulative costs of constraint violations and optimize the cumulative reward while trying to adhere to the maximum cumulative costs.
While less explored, there exist several techniques that ensure the satisfaction of hard constraints~\cite{pham2018optlayer,bhatia2019resource,ding2024reduced,dalal2018safe,sanket2020solving}. These approaches might generate actions that do not satisfy the constraints but utilize a correction mechanism to map the actions back into the valid action space. In addition, most of these approaches are restricted to off-policy algorithms~\cite{ding2024reduced,sanket2020solving}. In another line of research, solutions tailored for constrained allocation tasks have been proposed ~\cite{winkel2023constrained,winkel2023simplex}. However, these solutions are severely limited since they can only handle a specific subset of linear constraints and cannot handle more than two.

In this paper, we propose \ac{PASPO}, a novel \ac{RL}-based method that, firstly, decomposes the action space into several dependent sub-problems and, secondly, autoregressively computes the allocations step-by-step for each entity individually. In contrast to previous methods for hard constraints, we directly generate an action within the action space. This makes the correction of invalid actions unnecessary and, thus, avoids potential sampling bias introduced by the correction. 
Our new decomposition approach is implemented in a neural network-based policy function, which can be employed in on-policy and off-policy RL algorithms. We show that initialization bias can prevent proper exploration in early training which leads to premature convergence. Thus, we propose a de-biasing mechanism to stabilize exploration in early training stages.

We evaluate our approach against various baselines on three distinct allocation tasks: portfolio optimization, distributing computational workload in data centers, and a synthetic benchmark. These experiments demonstrate that our approach can outperform existing methods consistently and show the importance of the proposed de-biasing mechanism.

To summarize, the main contributions of our paper are:
\begin{itemize}
    \item A new autoregressive stochastic policy function applicable to arbitrary convex polytope action spaces of constrained allocation tasks.
    \item A new de-biasing mechanism to prevent premature convergence to a sub-optimal policy.
    \item An empirical evaluation that optimizes our new policy function using PPO~\cite{schulman2017proximal} and demonstrates improved results compared to state-of-the-art \ac{CRL} methods.
\end{itemize}

The remainder of the paper is structured as follows:
In Section~\ref{sec:related}, we provide an overview of the related work in \ac{CRL}, constrained allocation tasks, and autoregressive policy functions. Afterward, we formalize constrained allocation tasks in Section~\ref{sec:problem} and present our novel approach in Section~\ref{sec:method}. Section~\ref{sec:experiments} describes the results of our experimental evaluation, Section~\ref{sec:limitations} briefly discusses limitations and future work before Section~\ref{sec:summary} concludes the paper.

\section{Related Work}
\label{sec:related}

Resource allocation tasks are a widely researched area with numerous applications spanning logistics, power distribution, computational load balancing, security screening, and finance ~\cite{bhatia2019resource,tian2022prescriptive,ale2022d3pg,sanket2020solving,winkel2023constrained}. We identify three key research directions that are particularly important when discussing resource allocation tasks.

\textbf{Safe Reinforcement Learning}
The majority of work in  \ac{CRL} addresses soft constraints, a setting often referred to as Safe \ac{RL}. We will provide a brief overview of the most important methods in this field. For a more comprehensive examination of Safe \ac{RL}, we direct readers to the survey papers by ~\cite{liu2021policy,gu2022review}. A common technique in Safe \ac{RL} is the use of Lagrangian relaxation~\cite{altman1999constrained,liu2021policy}. Several works employ primal-dual optimization to leverage the Lagrangian duality, including \cite{chow2018risk,stooke2020responsive,liang2018accelerated}. Another frequently used approach involves different penalty terms~\cite{tessler2018reward,liu2020ipo,zhang2022penalized}. 
The authors of IPO~\cite{liu2020ipo} propose to use logarithmic barrier functions.
CPO~\cite{achiam2017constrained} extends TRPO~\cite{schulman2015trust} to ensure near-constraint satisfaction with each update. Additionally, two-step approaches such as FOCOPS~\cite{zhang2020first} and CUP~\cite{yang2022cup} are popular in the field. However, unlike our method, these approaches do not guarantee strict constraint satisfaction, particularly during training.

\textbf{Hard Constraints}
Although less studied than Safe \ac{RL}, several works address hard instantaneous constraints on actions to ensure full constraint satisfaction at any time step. Most of these approaches employ mechanisms to correct infeasible actions, i.e., those that violate constraints, into feasible actions~\cite{pham2018optlayer,bhatia2019resource,sanket2020solving,ding2024reduced}. In contrast, our method always generates feasible actions without the need for correction. OptLayer~\cite{pham2018optlayer} is one of the most prominent examples in this field, which employs OptNet~\cite{amos2017optnet} to map infeasible actions to the nearest feasible action. Similarly, \cite{sanket2020solving} propose a more efficient projection on the polytope action space than OptLayer. The authors of~\cite{bhatia2019resource} focus on resource allocation with hierarchical allocation constraints by proposing a faster approximate version of OptLayer. In~\cite{ding2024reduced}, the authors propose an off-policy algorithm based on the generalized reduced gradient method~\cite{abadie1969generalization} to handle non-linear hard constraints by projecting infeasible actions. In contrast, our method is not limited to off-policy algorithms.

In~\cite{winkel2023constrained,winkel2023simplex}, the action space is decomposed into independent subspaces. However, these approaches can only handle up to two allocation constraints. Furthermore, they are only applicable to binary allocation constraints. In contrast, our approach can handle an arbitrary number of constraints as well as any type of linear allocation constraints.

\textbf{Action Space Decomposition/Factorization}
The decomposition or factorization of multi-dimensional action spaces has been examined in several works~\cite{tavakoli2018action,metz2017discrete,pierrot2021factored}. A notable example is~\cite{tavakoli2018action}, 
in which the authors discretize a continuous action space into several independent action branches, each parameterized by individual network branches. 
In~\cite{metz2017discrete}, a variant of DQN~\cite{mnih2013playing} that discretizes a continuous action space into multiple discrete dimensions is proposed. These dimensions are sequentially parameterized, conditional on the previous sub-actions. Similarly, \cite{pierrot2021factored} propose an autoregressive factorization of an unconstrained action space into dependent sub-problems. 
Unlike our approach, these methods focus either on decomposing continuous action spaces into discrete action spaces or decomposing unconstrained action spaces. However, the decomposition of arbitrary convex polytope action spaces into tractable sub-action spaces remains a non-trivial challenge that our approach addresses.

\section{Problem Description}
\label{sec:problem}

An allocation task can be described as a finite-horizon \ac{MDP} $(S,A,T,R,\gamma)$, where $S$ represents the state space, $A$ the action space, $T:S \times A \times S \rightarrow [0,1]$ the state transition function, $R$ the reward function, and $\gamma \in [0,1] $ a discount factor. The goal of this task is to find a policy $\pi$ maximizing the expected cumulative reward $J^{\pi}_R = \mathbb{E_{\pi}}\left[\sum_{t=1}^n \gamma^t R \left( s_t,\pi (s_t),s_{t+1} \right) \right]$.

The action $a$ is an allocation $a = \{a_1, \ldots, a_n \} \in A$ over a set of $n$ entities $E = \{e_1, \ldots, e_n \}$ at each time step. Each element $a_i$ of the action vector $a$ represents the proportion allocated to entity $e_i$. Furthermore, allocation tasks require a complete allocation, i.~e., $\sum_{i=1}^{n} a_i =1$ and allocations cannot be negative ($a_i \geq 0$). Thus, the action space of unconstrained allocation tasks forms an $n$-dimensional standard simplex. A visualization of an unconstrained allocation action space is provided in Figure~\ref{fig:simplexunconstrained}.

Allocation tasks frequently include constraints, such as allocating at most 30\% to a subset of the entities. An example of a constrained action space is visualized in Figure~\ref{fig:simplexconstrained}.
Formally, an allocation constraint can be expressed as a linear inequality 
$\sum_{i = 1}^{n} c_i a_i \leq b$, where $c_i$ denotes the weighting of the allocation variable $a_i$ of entity $e_i$ and $b \in \mathbb{R}$ denotes the corresponding constraint limit. For the sake of readability and simplicity, we only define $\leq$ constraints since $a\geq b$ can be transformed into $-a\leq-b$ and $a=b$ can be rewritten as $a\leq b$ and $-a\leq -b$.

The action space $A$ of constrained allocation tasks can be easily expressed by a set of linear inequalities, defining a polytope $A = \{a \in [0,1]^n | Ca \leq b\}$, where 
\begin{equation}
    C \in {\mathbb{R}^{m \times n}} = \begin{bmatrix}
        c_{11} & \ldots & c_{1n} \\
        \vdots & \ddots & \vdots \\
        c_{m1} & \ldots & c_{mn} 
    \end{bmatrix}
\end{equation}
is a matrix of coefficients for the $m$ constraints, including those linked to the simplex constraints $\sum_{i=1}^{n} a_i =1$ and $a_i \geq 0~\forall i\in\{1,\ldots n\}$, as well as all coefficients for additional allocation constraints.
Let $a \in [0,1]^n$ represent an allocation vector and $b \in \mathbb{R}^m$ is the vector of constraint limits.

Alternatively, constrained allocation tasks can be defined using the framework of \acp{CMDP}.
A \ac{CMDP} extends the standard \ac{MDP} by a number of cost functions to incorporate the constraints. The goal is to maximize the expected cumulative reward while satisfying $m$ constraints on the expected cumulative costs. The expected cumulative costs for the $k$-th cost function $CF_k$ are defined as $J^\pi_{CF_k}=\mathbb{E}[\sum_{t=0}^T\gamma^t CF_k(s_t,a_t)]$. The $m$ constraints to be satisfied in the \ac{CMDP} are then stated as $J^\pi_{CF_k}\leq d_k $, where $d_k$ denotes the cost limit with $k\in\{1,\ldots, m\}$. To formulate constrained allocation tasks using \acp{CMDP}, the cost functions can be defined as $CF_k(s, a) = \max \{0, (Ca)_k - b_k\}$ to measure any allocation constraint violation as a cost. In addition, strict adherence to all allocation constraints at any point in time is required, i.e., $d_k=0$.
By formulating constraint allocation tasks using \acp{CMDP}, it becomes possible to use existing methods from Safe \ac{RL} for soft constraints. However, these methods cannot guarantee constraint satisfaction at all times~\cite{liu2021policy}. Let us note that our method does not use cost functions, instead it samples actions directly from the constrained action space.

\begin{figure}
     \centering
     \begin{subfigure}[t]{0.49\textwidth}
         \centering
         \includegraphics[width=0.6\textwidth]{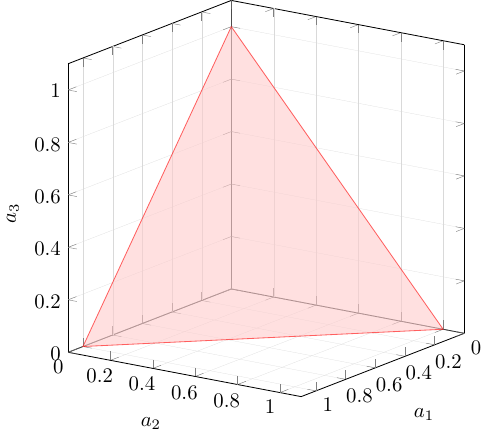}
         \caption{Unconstrained standard simplex}
         \label{fig:simplexunconstrained}
     \end{subfigure}
     \hfill
     \begin{subfigure}[t]{0.49\textwidth}
         \centering
         \includegraphics[width=0.6\textwidth]{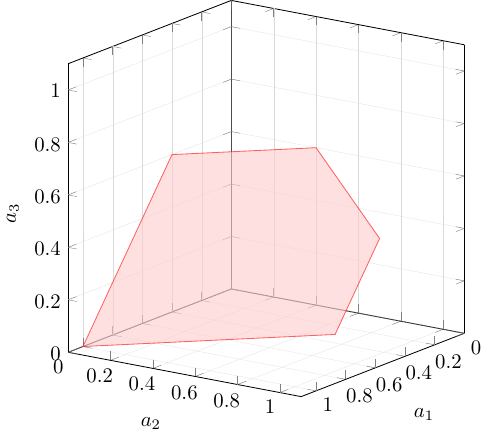}
         \caption{Constrained simplex with $a_3\leq 0.6$ and $a_2\leq 0.7$}
         \label{fig:simplexconstrained}
     \end{subfigure}
    \caption{\textbf{Examples of 3-dimensional allocation action spaces} (a) unconstrained and (b) constrained (valid solutions as red area).}
    \label{fig:two graphs}
\end{figure}

\section{\acf{PASPO}}
\label{sec:method}

Our approach \ac{PASPO} autoregressively computes the allocation to every single entity in an iterative process until all allocations are fixed. We will later show that this step-wise decomposition allows for a tractable parametrization of the action space. 

\subsection{Autoregressive Polytope Decomposition}
\ac{PASPO} starts by determining the feasible interval $[a_1^{min}, a_1^{max}]$ for allocations into the first entity $e_1$. Then, we sample the first allocation $a_1$ from this interval. 
The details of the sampling process will be further discussed in Section \ref{sec:parameterizablepolicy}. Fixing an allocation impacts the shape of the remaining action space. Thus, we have to compute the shape of the polytope $A^{(2)}$ described by $C^{(2)}$ and $b^{(2)}$ before we can sample the next allocation $a_2$.

Each iteration $i$ starts with determining the interval $[a_i^{min}, a_i^{max}]$ of all feasible values for $a_i$. Geometrically, this interval is bounded by the minimum and the maximum value of the remaining polytope $A^{(i)}$ in the $i$-th dimension associated with the allocation $a_i$. To determine $a_i^{min}$, we solve the following linear program:
\begin{displaymath}
\begin{aligned}
\text{minimize} \hspace{10pt} &  a_i \\
    \text{s.t.} \hspace{10pt} & 
    C^{(i)} a^{(i)}
    \leq
    b^{(i)} \\
\end{aligned}
\end{displaymath}
where $C^{(i)}$ are the constraint coefficients for the entities $e_i,\ldots,e_n$,~$b^{(i)}$ are the adjusted constraint limits, and $a^{(i)}$ describes the unfixed allocations. We determine $a_i^{max}$ by solving the respective maximization problem. For the first iteration $i=1$, we define $C^{(1)}=C$, $b^{(1)}=b$ and $a^{(1)}=a$.
After sampling an allocation $a_i$ from the interval $[a_i^{min},a_i^{max}]$. The resulting polytope $A^{(i+1)}$ for the next iteration $i+1$ is described by the following inequality system: 
\begin{equation}
\label{eq:updated_polytope}
\underbrace{
\begin{bNiceMatrix}[margin]
\Block[fill=red!15, rounded-corners]{3-3}{}
c_{1,i+1} & \cdots & c_{1,n} \\
\vdots & \ddots & \vdots \\
c_{m,i+1} & \cdots & c_{m,n} \\
\end{bNiceMatrix}
}_{C^{(i+1)}}
\underbrace{
\begin{bNiceMatrix}[margin]
a_{i+1} \\
\vdots \\
a_{n} \\
\end{bNiceMatrix}
}_{a^{(i+1)}}
\leq 
\underbrace{
\begin{bNiceMatrix}[margin]
\Block[fill=yellow!35, rounded-corners]{3-1}{}
b_{1}^{(i)} \\
\vdots \\
b_{m}^{(i)} \\
\end{bNiceMatrix}
-
a_{i}
\begin{bNiceMatrix}[margin]
\Block[fill=blue!15, rounded-corners]{3-1}{}
c_{1,i} \\
\vdots \\
c_{m,i}\\
\end{bNiceMatrix}
}_{b^{(i+1)}}
\end{equation}

To define the new coefficient matrix $C^{(i+1)}$ (red), we remove the first column of the coefficient matrix of the previous iteration $C^{(i)}$. To calculate the new vector $b^{(i+1)}$ of constraint limits, we subtract the removed column (blue) scaled by the fixed allocation $a_i$ from the previous constraint limits $b^{(i)}$ (yellow).
We iterate over all entities until we determine $a_{n-1}$. Allocation $a_n$ is already determined as soon as the allocations $a_1,\ldots,a_{n-1}$ are fixed because of the simplex constraint $\sum_{i=1}^{n} a_i =1$. Sampling an allocation using this approach always guarantees constraint satisfaction and it is possible to sample any action in the constrained action space. A formal proof of these guarantees can be found in Appendix~\ref{proof}.

Figure~\ref{fig:allocation_example} displays a visualization of the process for a 3-dimensional case. The set of valid solutions before any allocations have been fixed is shown in Figure~\ref{fig:allocation_example_1}. Figure~\ref{fig:allocation_example_2} depicts the first iteration after $a_1=0.3$ has been determined, and the resulting new polytope $A^{(2)}$, i.e., a set of valid solutions, shrinks to the red line is shown. Figure~\ref{fig:allocation_example_3} shows the second iteration after also $a_2=0.5$ has been determined. It can be seen that the new polytope $A^{(3)}$ contains only a single valid solution represented as a red dot, making a third iteration unnecessary since the only remaining solution is to allocate $a_3=0.2$, resulting in a final allocation of $a=(0.3, 0.5, 0.2)$.

\begin{figure}
     \centering
     \begin{subfigure}[t]{0.3\textwidth}
         \centering
         \includegraphics[width=\textwidth]{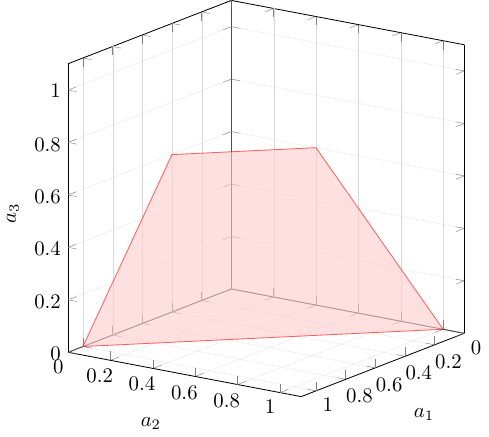}
         \caption{Original set of valid solutions, i.e., $A^{(1)}$ (red area)}
         \label{fig:allocation_example_1}
     \end{subfigure}
     \hfill
     \begin{subfigure}[t]{0.3\textwidth}
         \centering
         \includegraphics[width=\textwidth]{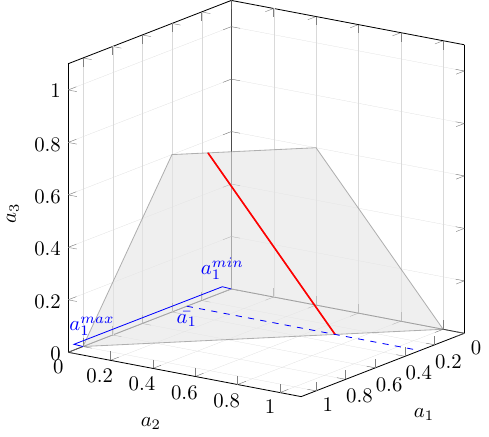}
         \caption{Remaining valid solutions in $A^{(2)}$ (red line) after $a_1=0.3$ (dashed blue line) was fixed, i.e., sampled}
         \label{fig:allocation_example_2}
     \end{subfigure}
     \hfill
     \begin{subfigure}[t]{0.3\textwidth}
         \centering
         \includegraphics[width=\textwidth]{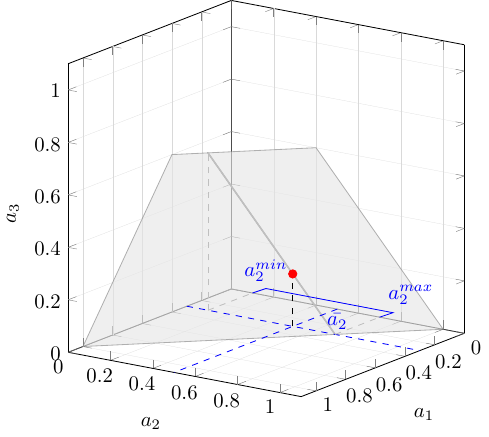}
         \caption{Only a single valid solution remains in $A^{(3)}$ (red dot) after $a_1=0.3$ and $a_2=0.5$ (dashed blue lines) are fixed, i.e., sampled}
         \label{fig:allocation_example_3}
     \end{subfigure}
        \caption{\textbf{Example of sampling process} of an action $a=(a_1,a_2,a_3)$ in a 3-dimensional constrained allocation task.}
        \label{fig:allocation_example}
\end{figure}

\subsection{Parameterizable Policy Process}
\label{sec:parameterizablepolicy}
Our goal is to define a learnable stochastic policy function over the action space. For unconstrained allocation tasks, a Dirichlet distribution can be used to parameterize the action space~\cite{tian2022prescriptive,winkel2023simplex}. Unfortunately, to the best of our knowledge, there is no known parameterizable, closed-form distribution function over arbitrary convex polytopes as in our setting. In fact, even uniform sampling over a convex polytope is an active research problem~\cite{chen2018fast}. 

We sequentially constructed an action $a$ from the polytope action space $A$ in the previous section. Now, we describe how to utilize this process to define a parameterizable policy function over the action space $A$. We model the distribution for allocating each individual entity using a beta distribution that is normalized to the range $[a_{i}^{min} ,a_{i}^{max}]$. This distribution is also known as the four-parameter beta distribution~\cite{carnahan1989maximum}. Its probability density function is defined as:
\begin{displaymath}
    p(x;\alpha ,\beta ,a_{i}^{min},a_{i}^{max})={\frac {(x-a_{i}^{min})^{\alpha -1}(a_{i}^{max}-x)^{\beta -1}}{(a_{i}^{max}-a_{i}^{min})^{\alpha +\beta -1}B(\alpha ,\beta )}},
\end{displaymath}
where $B(\alpha, \beta)$ is the beta function. It is important to note that any other parameterizable distributions with bounded support in the range $[a_{i}^{min} ,a_{i}^{max}]$ can be used, such as a squashed Gaussian distribution. However, our preliminary experiments indicated that the beta distribution performs particularly well.

To optimize the policy $\pi_{\theta}(s)$ over the complete allocations, we follow the approach of~\cite{pierrot2021factored} for training an autoregressively dependent series of sub-policies. A fixed but arbitrary order of entities is used for sampling the allocations $a_i$. The sub-policy $\pi^{i}_{\theta}(a_i | a_1, \ldots, a_{i-1})$ is conditional on the previous allocations $a_1, \ldots, a_{i-1}$. Using this autoregressive dependence structure, the policy is defined as: $\pi_{\theta}(a|s) = \pi^1_{\theta}(a_1|s) \cdot \pi^2_{\theta}(a_2) | s, a_1) \ldots \pi^n_{\theta}(a_n | s,a_1, \ldots, a_{n-1})$. This policy can be jointly optimized. We parameterize each sub-policy using a neural network that receives an embedding of the state and the previously selected actions as input.

An entropy term is often used to encourage exploration. However, our policy does not have a closed-form solution for entropy. Therefore, we follow~\cite{pierrot2021factored} to empirically estimate the entropy:
\begin{equation}
    H_{\text{emp}}(\pi_{\theta}(\cdot | s)) = \mathbb{E}_{a \sim \pi_{\theta}(\cdot | s)} \left[ \sum_{i=1}^n H\left(\pi_{\theta}^i\left(\cdot | s, a_1, \ldots, a_{i-1}\right)\right) \right]
\end{equation}
Here, $H$ denotes the entropy of the beta distribution. We compute the expectation within each training batch to estimate the entropy of the complete policy function over the entropies of single actions. 
Let us note that when using an off-policy algorithm, the actions must be resampled using the current policy. As the current policy might have a significantly different parametrization than the sampling policy, we have to generate actions based on the current policy to estimate the entropy properly.

\subsection{Policy Network Architecture}
We create an embedding of the state using an MLP, denoted as $f_{\theta}(s) = MLP(s)$. We parameterize the probability distribution over allocations for each entity using an MLP $\pi^i_{\theta_i}(s) = MLP(f_{\theta}(s),a_1, \ldots, a_{i-1})$, which receives the latent encoding of the state and the previously sampled allocations $a_1, \ldots, a_{i-1}$ as input. Note that each of the MLPs $\pi^i_{\theta_i}$ has its own parameters. For further details, we refer to the Appendix.

\subsection{De-biasing Mechanism}
\begin{figure}
    \centering
    \begin{subfigure}[t]{0.31\textwidth}
        
    \includegraphics[width=1\linewidth]{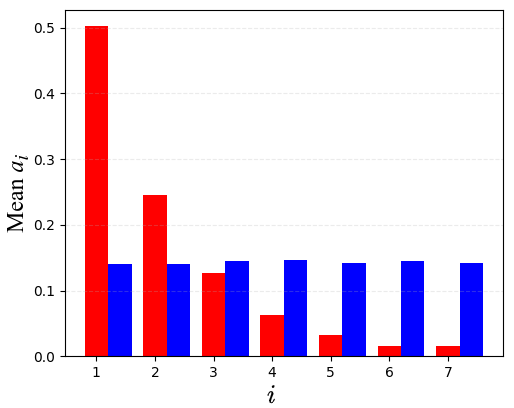}
    \caption{Mean allocations $a_i$}
    \label{fig:7dunconstrained}
    \end{subfigure}
    \hfill
    \begin{subfigure}[t]{0.31\textwidth}
    
    \includegraphics[width=1\linewidth]{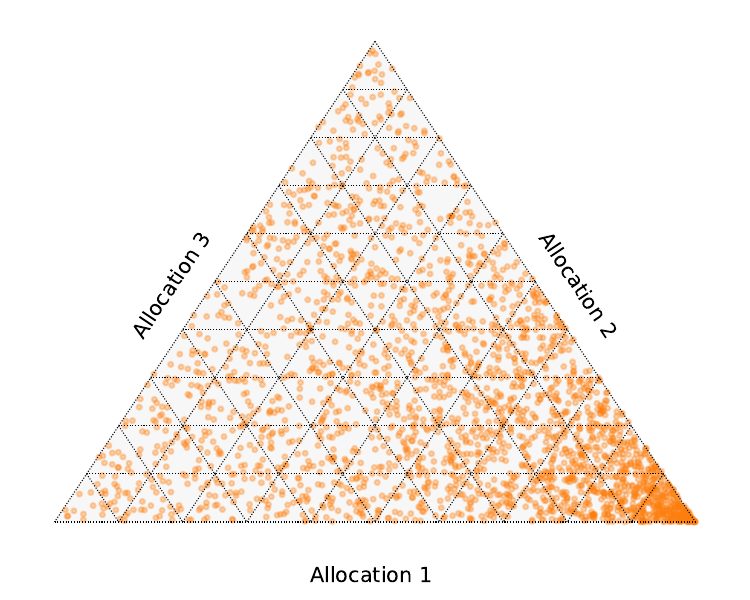}
    \caption{Uniform sampling}
    \label{fig:3Dinit_biased}
    \end{subfigure}
    \hfill
    \begin{subfigure}[t]{0.31\textwidth}
        
    \includegraphics[width=1\linewidth]{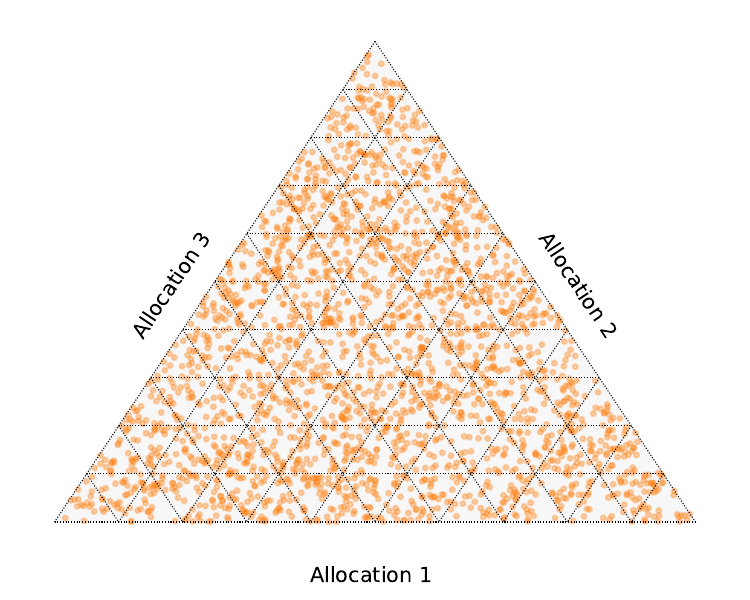}
    \caption{Our initialization}
    \label{fig:3Dinit_unbiased}
    \end{subfigure}
    \caption{\textbf{The impact of initialization in an unconstrained simplex.} (a) Mean allocations $a_i$ to each entity in a seven entity setup when sampling each individual allocation using the \textcolor{red}{uniform distribution (red)} vs.~\textcolor{blue}{our initialization (blue)}. (b,c) Distribution of 2500 allocations in a three entity setup when sampling each individual allocation uniformly (b) or using beta distributions with parameters set according to our initialization (c).}
    \label{fig:imbalance_uniform_init}
\end{figure}

\begin{algorithm}[tb]
    \caption{Maximum likelihood estimation of parameter de-biasing terms}
    \label{alg:debiasing}
    \textbf{Input}: Polytope $A=\{a\in \mathbb{R}^n_{0,+}|Ca\leq b\}$, number of samples $k$ \\
    \textbf{Output}: Beta distribution shape parameters $\hat{\alpha}_i, \hat{\beta}_i$ as de-biasing terms
\begin{algorithmic}[1]
    \STATE Let $A^{(1)}_j=A$ for $j=\{1,\ldots, k\}$ 
    \STATE Sample $k$ allocations $\{a_{(j)}\}_{j=1...k}$ uniformly within $A$ via rejection sampling 
    \FOR{each dimension $i$ in $\{1, \ldots, n\}$}
        \FOR{each sample $j$ in $\{1, \ldots, k\}$}
            \STATE Calculate interval $[a_{(j)i}^{min}, a_{(j)i}^{max}]$ using LP based on $A^{(i)}_j$
            \STATE Normalize sampled allocation to support [0,1] of beta distribution: $a^{norm}_{(j)i}= \frac{a_{(j)i}-a_{(j)i}^{min}}{a_{(j)i}^{max} - a_{(j)i}^{min}}$
            \STATE Compute polytope $A^{(i+1)}_j$ from $A^{(i)}_j$ using sampled allocation $a_{(j)i}$ (see Eq.~\ref{eq:updated_polytope}) 
        \ENDFOR
        \STATE ML estimation of beta distribution parameters $\hat{\alpha}_i, \hat{\beta}_i$ using $\{a^{norm}_{(j)i}\}_{j=1...k}$
    \ENDFOR
\end{algorithmic}
\end{algorithm}

A drawback of generating actions by an autoregressive process is that a random initialization of the beta distributions leads to a sampling bias towards the entities selected earlier in the process. The effect is caused by the autoregressive dependency structure of our process. To sample an allocation of $80\%$ for entity $a_i$ the cumulated fixed allocations for earlier entities $\sum_{j=1}^{i-1}{a_j}$ must be at most $20\%$. However, this is rather unlikely if we initialize the distribution for all entities in a similar way. This effect can be observed in the red bars of Figure~\ref{fig:7dunconstrained}. The red bars correspond to the average allocation for each dimension when uniformly drawing from an unconstrained seven-dimensional simplex with our autoregressive process for each entity $e_i$. As expected, the mean for the first dimension is $0.5$, which is the mean of a uniform distribution over the interval  $[0,1]$. Correspondingly, the mean is decreased by half for any successive further entity until entity 6, which has the same mean as entity 7 due to the simplex constraint. Even though the bias is more complex for constrained allocation spaces, a similar effect can be expected.

For policy gradient methods, such a bias in the initialization of the policy function can lead to convergence to poorly performing policies or long training times. As the initial policy is crucial for ensuring sufficient exploration of the state-action space, a biased initial distribution leads to underexplored regions in the state-action space. Consequently, well-performing actions might not be discovered. To counter this effect, we propose a de-biasing mechanism that adjusts the initial parameters of beta distributions to estimate a uniform sampling over the joint action space. During learning, the amount of required exploration decreases, and the parameters of the policy function are optimized to increase the cumulative rewards. Thus, the impact of our de-biasing mechanism should diminish over time. We achieve this effect by adding a de-biasing term to the linear layers' initial bias terms, predicting $\alpha_i$ and $\beta_i$ for entity $i$. As the default initialization of the bias terms has zero means, the first iterations use $\alpha$ and $\beta$ values close to the de-biasing terms.

To determine suitable initial values for each iteration step, we proceed as described in Algorithm~\ref{alg:debiasing}. We start by uniformly sampling $n$ data points from the complete action space $A$. We do this by rejection sampling, i.e., we sample over the standard simplex and reject the samples outside the action polytope $A$. 
To determine the parameters corresponding to the acquired uniform sample, we project any allocation for each entity to a standard interval between $[0,1]$. However, for this step, we have to determine the interval $[a_i^{min},a_i^{max}]$ for each entity following the above process. We determine the relative position in this interval, corresponding to the position in the named standard interval. After collecting relative values for each sample and entity, we employ the standard maximum likelihood estimator to generate an empirical estimate of the $\alpha_i$ and $\beta_i$ for each entity $e_i$. 
The blue bars in Figure~\ref{fig:7dunconstrained} correspond to the results on the unconstrained seven-dimensional unconstrained simplex. Figure~\ref{fig:3Dinit_biased} shows autoregressive sampling based on uniform distribution, whereas Figure~\ref{fig:3Dinit_unbiased} displays the result of our initialization for a three-dimensional example. It can be seen that the result of our initialization of the autoregressive process closely resembles a uniform distribution over the complete action space.

\section{Experiments}
\label{sec:experiments}
In this section, we provide an extensive experimental evaluation of our approach in various scenarios demonstrating its ability to handle various allocation tasks and constraints. We use two real-world tasks: Portfolio optimization~\cite{winkel2023constrained} and compute load distribution~\cite{ale2022d3pg}. Additionally, we create a synthetic benchmark with a reward surface generated by a randomly initialized MLP. Each of these tasks comes with a different set of allocation constraints. We will briefly describe each setting in the following and refer the reader to the Appendix for more details.

\textbf{Portfolio Optimization}
Portfolio optimization is a prominent constrained allocation task. In this task the agent has to allocate its wealth over 13 assets at each time step. We use the environment of~\cite{winkel2023constrained}. Each investment period contains 12 months and the investor needs to reallocate the portfolio each month. This environment is highly stochastic since each trajectory is sampled from a hidden Markov model fitted on real-world NASDAQ-100 data. 
After every 5120 environment steps, we run eight parallel evaluations on 200 fixed trajectories.
Constraints in this setting define minimum and maximum allocation to groups of assets. Additionally, we add constraints where the constraint coefficients in $C$ correspond to portfolio measures like a minimum dividend yield or a maximum on the CO2 intensity. 

\textbf{Compute Load Distribution}
The environment is based on the paper of \cite{ale2022d3pg} and simulates a data center in which computational jobs need to be split into sub-jobs to enable parallel processing across nine servers. Here, we use five constraints that are randomly sampled as follows: First, we sample the number of affected entities for each constraint. We then sample the constraint coefficients from the range $[0,1]$.

\textbf{Synthetic Environment}
In addition to the aforementioned environments, we propose a synthetic benchmark. The reward surface consists of an MLP with random weights. Each episode compromises two states. As it is completely deterministic, it provides a simple yet effective way to benchmark approaches for constrained allocation tasks. In this setting, we create the constraints by randomly sampling 30 points and use their convex hull as the polytope defining the action space. We utilize a seven-dimensional setting with 611 constraints in our experiments.

\subsection{Experimental Setup}

We train \ac{PASPO} using PPO~\cite{schulman2017proximal} and compare our approach to various baselines, including state-of-the-art approaches for constrained allocation tasks and Safe RL. Specifically, we compare \ac{PASPO} with five representative approaches from Safe RL: CPO~\cite{achiam2017constrained}, CUP~\cite{yang2022cup}, IPO~\cite{liu2020ipo}, P3O~\cite{zhang2022penalized}, and PPO with Lagrangian relaxation. Additionally, we compare our method to OptLayer~\cite{pham2018optlayer}, a popular projection-based method for linear hard constraints. To maintain a consistent and fair comparison across different methods, we use the same hyperparameters across the different methods if possible. Many Safe RL approaches have difficulties handling equality constraints~\cite{ding2024reduced}. Therefore, we use a Dirichlet distribution to represent the policy in the baselines, thereby ensuring satisfaction of the simplex equality constraint. We do not share the parameters between the policy and value function. We use a fully-connected MLP with two hidden layers of 32 units and ReLU non-linearities for each policy, cost, and value function. In our approach, the state encoder and each policy head consists of a two-layer MLP. The training process is run for 150,000 steps and the results are averaged over five different seeds. In the portfolio optimization task, we use ten different seeds due to the stochasticity of the financial environment and train for 250,000 steps. Given the relatively small network sizes, training is conducted exclusively on CPUs. We implement our algorithm and the baselines using RLlib and PyTorch. More details regarding the environments, training, and hyperparameters can be found in the Appendix.

\subsubsection{Performance of \ac{PASPO}}
\begin{figure}[t]
    \centering

    \begin{minipage}[t]{\textwidth}
        \centering
        \includegraphics[width=\textwidth]{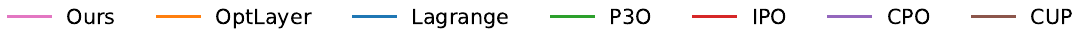}
    \end{minipage}
    
    \begin{minipage}[t]{\textwidth}
    \begin{subfigure}{0.32\textwidth}
        \includegraphics[width=\textwidth]{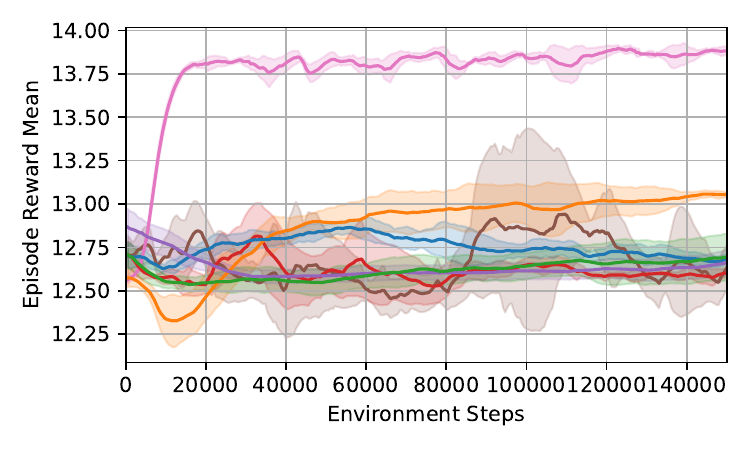}
    \end{subfigure}
    \hfill
    \begin{subfigure}{0.32\textwidth}
        \includegraphics[width=\textwidth]{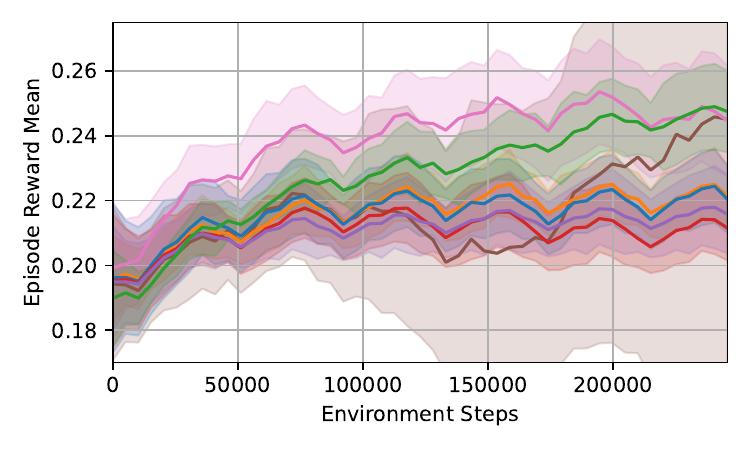}
    \end{subfigure}
    \hfill
    \begin{subfigure}{0.32\textwidth}
        \includegraphics[width=\textwidth]{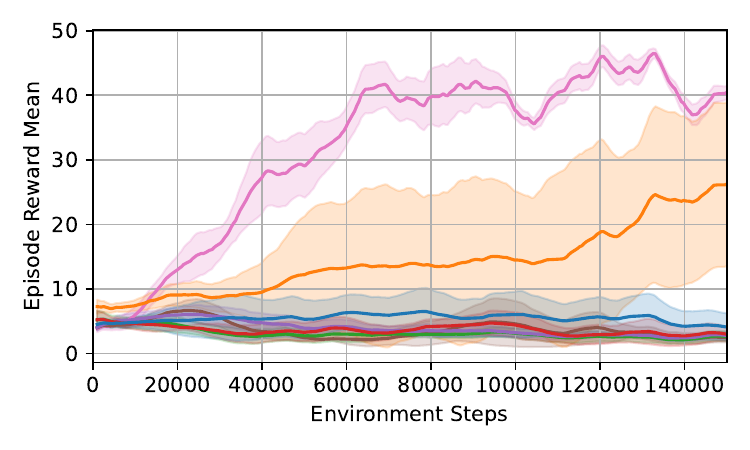}
    \end{subfigure}
    \end{minipage}
    \begin{minipage}[b]{\textwidth}        
    \begin{subfigure}[b]{0.32\textwidth}
        \centering
        \includegraphics[width=\textwidth]{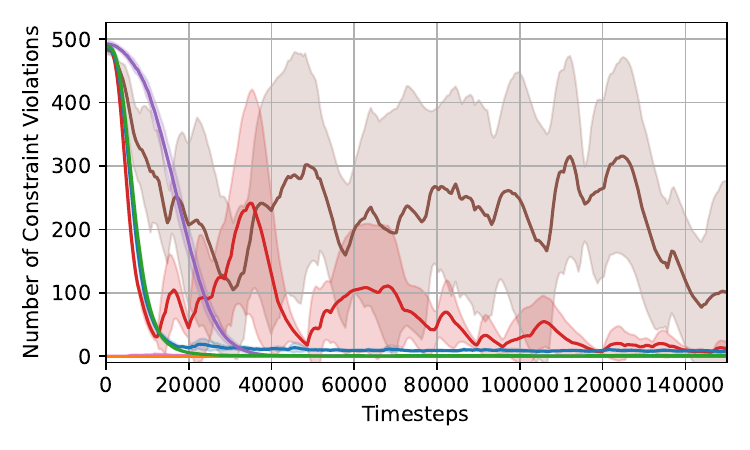}
        \caption{Synthetic Benchmark}
    \end{subfigure}
    \hfill
    \begin{subfigure}[b]{0.32\textwidth}
        \centering
        \includegraphics[width=\textwidth]{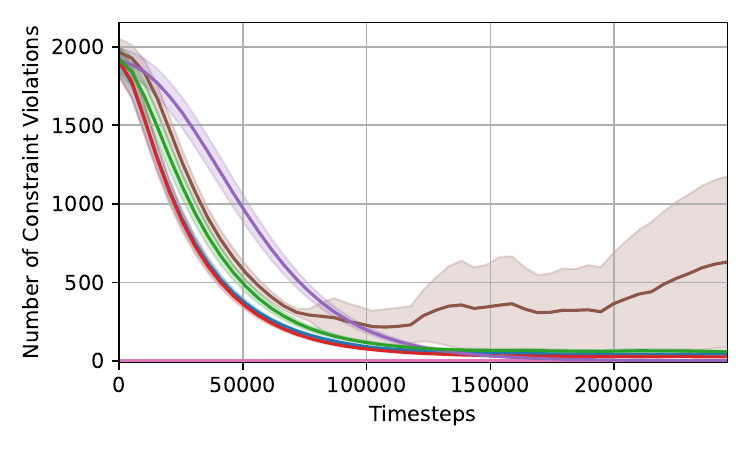}
        \caption{Portfolio Optimization}
    \end{subfigure}
    \hfill
    \begin{subfigure}[b]{0.32\textwidth}
        \centering
        \includegraphics[width=\textwidth]{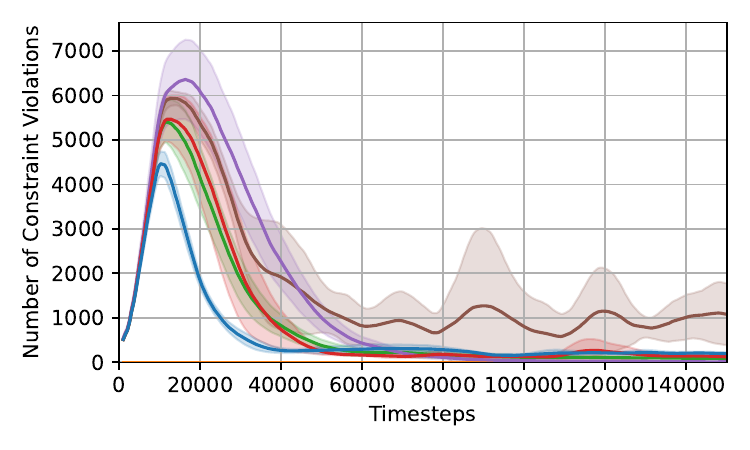}
        \caption{Compute Load Distribution}
    \end{subfigure}
    \end{minipage}

    \caption{\textbf{Learning curves of all methods in three environments.} The x-axis corresponds to the number of environment steps. The y-axis is the average episode reward (first row), and the number of constraint violations during every epoch (second row). For portfolio optimization (b) we report the performance running eight evaluation on 200 fixed market trajectories. This is because in training, every trajectory is different which makes comparisons hard. Curves smoothed for visualization.}
    \label{fig:experiment_results}
\end{figure}

We visualize the performance and constraint violations of all methods across our three environments in Figure~\ref{fig:experiment_results}. A tolerance of $1e^{-3}$ is used for evaluating constraint violations and we report the total number of violations per episode. In all three environments, \ac{PASPO} converges faster to a higher average return compared to baselines. Additionally, while all compared soft-constraint methods display constraint violations, only the hard constraint approaches \ac{PASPO} and OptLayer guarantee to permanently satisfy the constraints. Finally, we can observe that the variance of \ac{PASPO} is rather low compared to other methods. However, in portfolio optimization task (b) our approach displays some variance which we attribute to the stochasticity of the environment. Overall, these results demonstrate that our approach is not only able to consistently outperform other algorithms in terms of rewards but also guarantees no constraint violations.

\subsubsection{Importance of de-biased Initialization and Order}
\begin{figure}
    \centering
    \begin{subfigure}[t]{0.45\textwidth}
        \includegraphics[width=\textwidth]{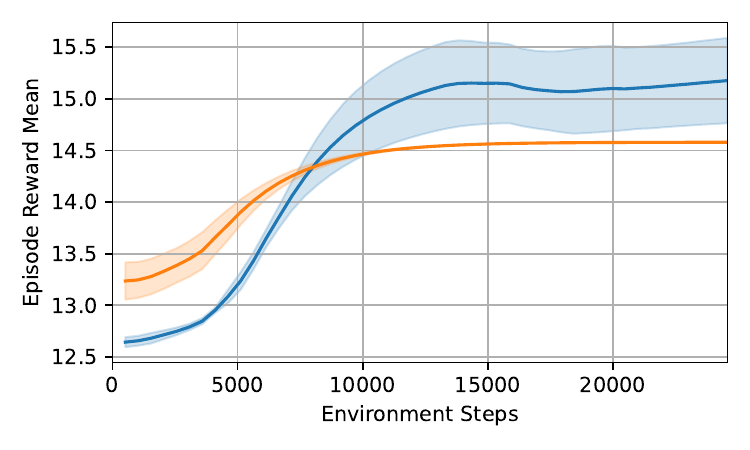}
    \end{subfigure}
    \hfill
    \begin{subfigure}[t]{0.45\textwidth}
        \includegraphics[width=\textwidth]{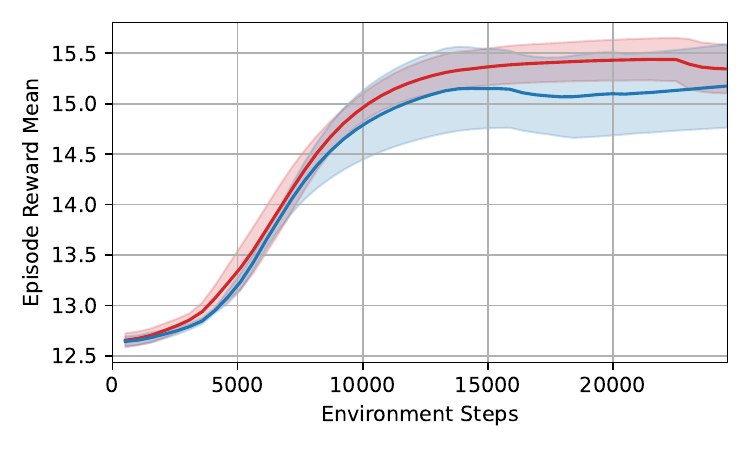}
    \end{subfigure}
    \caption{\textbf{Ablations} in (a) show the performance of our approach with (blue) and without (orange) the de-biased initialization. In (b) depicts the impact of the allocation order. We reverse the allocation order (red).}
    \label{fig:ablation}
\end{figure}

We conduct ablation studies to investigate the impact of our de-biased initialization and the order of entity allocation on our synthetic benchmark. No constraints are applied except for the simplex constraint to highlight the effects. The results, shown in Figure~\ref{fig:ablation}, indicate that without de-biased initialization (orange in (a)), learning is slower and converges prematurely to a sub-optimal policy. In (b), we explore the impact of allocation order by reversing it (red) and observe no significant performance difference. This indicates that our approach is robust to the allocation order due to the use of the de-biasing initialization.

\section{Limitations and Future Work}
\label{sec:limitations}

While \ac{PASPO} guarantees that constraints are always satisfied, it is considerably more computationally expensive than standard neural networks in allocation tasks with many entities, as the sampling of each action requires solving a series of linear programs.
RL in high-dimensional continuous action spaces is a very challenging task. Our approach cannot overcome this issue and also struggles in very high-dimensional settings. 
For future work, we plan to extend \ac{PASPO} to also incorporate state-dependent constraints. While we evaluate our approach only on benchmarks with hard constraints, it can be applied to settings with both hard and soft cumulative constraints. In these scenarios, our method for handling hard constraints can be easily combined with most Safe RL algorithms to handle soft cumulative constraints.

\section{Conclusion}
\label{sec:summary}
In this paper, we examine allocation tasks where a certain amount of a resource has to be distributed over a set of entities at every step. This problem has many applications like logistics tasks, portfolio management, and computational workload processing in distributed environments. In all these applications, the set of feasible allocations might be bound by a set of linear constraints. Formally, these restrict the action space to a convex polytope. To define a stochastic policy function that can be used with policy gradient methods in \ac{RL}, we propose an autoregressive process that computes allocation sequentially. We employ linear programming to compute the range of feasible allocations for an entity given the already fixed allocations of other entities. Our policy function consists of a sequence of one-dimensional beta distributions where the shape parameters $\alpha$ and $\beta$ are learned by neural networks. To counter the effect of initialization bias, we utilize a de-biasing mechanism to ensure sufficient exploration and prevent premature convergence to a sub-optimal policy. In our experiments, we demonstrate that our novel method \ac{PASPO} yields better results than state-of-the-art approaches while not having any constraint violations. Furthermore, we show that our initialization method yields better results than random initializations and counters the impact of the allocation order.

\newpage

\bibliographystyle{abbrvnat}
\bibliography{references}


\appendix

\section{Environments}
The implementation for all three environments can be found at: \url{https://github.com/niklasdbs/paspo}.

\subsection{Financial Environment}

The financial environment used for testing our approach is based on \cite{winkel2023simplex}.
The financial market trajectories in this environment are sampled from a hidden Markov model, which was fitted based on real-world NASDAQ-100 data from  January 3rd, 2011 to December 1st, 2021. The environment offers differently sized data sets of randomly selected assets contained in the NASDAQ-100.
The experiments in the financial environment for this paper are run with 13 assets, which corresponds to the \emph{model\_parameter\_data\_set\_G\_markov\_states\_2\_12} data set with the \emph{include\_cash\_asset=true} option. We initialize the environment with \emph{seed=2}.

Table~\ref{tab:assetsenv} shows a list of the assets used in the experiments.

For the experiments a random combination of two types of constraints typical for financial tasks is used: (a) a randomly selected subset of assets to either stay above or below an randomly selected allocation threshold and (b) thresholds for financial or environmental portfolio measures that can be calculated as an weighted average of the assets' individual measures, i.e., as a weighted linear combination. A list of these measure can be found in Table~\ref{tab:measure_estimates}.

The experiments include 5 constraints, of which the number constraints of type (a) and type (b) is randomly decided. The exact implementation can be found in our code polytope\_loader.py (generate\_random\_fin\_env\_polytope\_rejection\_sampling). We use the seed 2 to generate the constraints.

\begin{table}[ht]
\centering
\begin{tabular}{C{1.0cm} | c C{1.2cm} C{4.5cm} }
 \toprule
 Index & ISIN & Ticker & Name \\
 \midrule
1 & - & CASH & CASH\\
2 & US5949181045 & MSFT & Microsoft Corporation\\
3 & US0567521085 & BIDU & Baidu Inc. \\
4 & US00724F1012 & ADBE & Adobe Inc.\\
5 & US6937181088 & PCAR & Paccar Inc.\\
6 & US67066G1040 & NVDA & NVIDIA Corporation\\
7 & US8552441094 & SBUX & Starbucks Corporation\\
8 & US4612021034 & INTU & Intuit Inc.\\
9 & US0530151036 & ADP & Automatic Data Processing Inc.\\
10 & US0231351067 & AMZN & Amazon.com Inc.\\
11 & US2786421030 & EBAY & eBay Inc.\\
12 & US0311621009 & AMGN & Amgen Inc.\\
13 & US7475251036 & QCOM & Qualcomm Inc.\\ 
\bottomrule
\end{tabular}
\caption{List of assets used in the environment.}
\label{tab:assetsenv}
\end{table}

\begin{table}[ht]
\centering
\begin{tabular}{L{1.3cm} C{2.0cm} C{2.0cm} C{2.0cm} C{2.0cm} C{2.0cm}}
\toprule
& \textbf{Est. Total Energy Use To EVIC USD in million} & \textbf{Est. Total CO2 Equivalent Emissions To EVIC USD in million} & \textbf{Est. Weighted Average Cost of Capital, (\%)} & \textbf{Est. Dividend yield, (\%)} & \textbf{Est. Return On Equity, (\%)} \\
\midrule
\textbf{CASH} & 0.00 & 0.00 & 0.00 & 0.00 & 0.00 \\
\textbf{MSFT} & 23.49 & 1.75 & 8.19 & 1.89 & 40.15 \\
\textbf{BIDU} & 70.80 & 15.17 & 7.12 & 0.00 & 9.01 \\
\textbf{ADBE} & 12.98 & 1.12 & 6.81 & 2.28 & 92.49 \\
\textbf{PCAR} & 83.66 & 10.27 & 7.31 & 3.14 & -43.29 \\
\textbf{NVDA} & 17.17 & 1.58 & 6.20 & 3.00 & 127.85 \\
\textbf{SBUX} & 85.73 & 6.79 & 7.24 & 1.13 & 26.61 \\
\textbf{INTU} & 41.23 & 17.15 & 7.87 & 0.00 & 21.50 \\
\textbf{ADP} & 1.70 & 0.15 & 8.12 & 0.56 & 22.46 \\
\textbf{AMZN} & 4.69 & 0.39 & 8.28 & 0.00 & 44.48 \\
\textbf{EBAY} & 3.49 & 0.30 & 9.35 & 0.02 & 80.24 \\
\textbf{AMGN} & 33.87 & 3.29 & 7.09 & 0.73 & 37.76 \\
\textbf{QCOM} & 47.07 & 4.09 & 8.33 & 2.16 & 36.06 \\
\bottomrule
\end{tabular}
\caption{KPI estimates for assets based on 2021 (final year of the used data set, source: Refinitiv); EVIC - Enterprise value including Cash}
\label{tab:measure_estimates}
\end{table}

\subsection{Compute Environment}

The compute environment used for testing our approach is based on \cite{ale2022d3pg}. 
The agent's task is to allocate compute jobs to a given set of servers in a data center. A reward is triggered for each job that was completed in a predetermined maximum allowed computation time. 
The challenge of this environment is that the agent needs to match the queue of jobs still to be allocated with the different computational capabilities of the servers as well as each server's individual queue of jobs still to be computed.
It is assumed that the compute jobs in the environment can be arbitrarily split and computed in parallel. The creation of new compute jobs is triggered by $n$ users and follows a Poisson process.
A job is defined by its \emph{payload size}, i.e., the data to be transferred to a server, its  \emph{required CPU cycles} for the processing workload, and its maximum allowed time until the job needs to be completely processed. These attributes for the jobs that can be created by each user are randomly sampled at creation of the environment.

The experiments in this paper run with a setup of 9 servers and 9 users that generate compute jobs.
The parameter set used is \emph{parameter\_set\_9\_9\_id\_0}. We initialize the environment with \emph{seed=1}. The randomly sampled specifications for the nine servers can be found in Table~\ref{tab:frequency_limits} and the job attributes created by the nine users can be found in Table~\ref{tab:task_data_limits}.

To generate the constraints, we first sample the number of affected entities between 2 and 8 for each constraint and randomly choose the affected entities accordingly. We then uniformly sample constraint coefficients from the interval $[0,1]$, as well as a corresponding constraint limit between 0 and 1. We use a seed of 1 to generate 5 constraints. The implementation can be found in polytope\_loader.py (generate\_random\_polytope\_rejection\_sampling).

\begin{table}[ht]
\centering
\begin{tabular}{C{1.3cm} R{5.0cm}}
\toprule
\textbf{Index} & \textbf{Max Compute Cycles per Second} 
\\ \midrule
1 & 2 836 258 583 \\
2 & 855 913 878   \\
3 & 652 109 364   \\
4 & 789 819 414   \\
5 & 3 187 852 760 \\
6 & 974 311 629   \\
7 & 2 005 143 973 \\
8 & 1 481 875 307 \\
9 & 2 216 715 088 \\
\bottomrule
\end{tabular}
\caption{Server Specifications}
\label{tab:frequency_limits}
\end{table}

\begin{table}[ht]
\centering
\begin{tabular}{C{1.3cm} C{2.5cm} C{2.5cm} C{2.5cm} C{2.5cm}}
\toprule
\textbf{User} & \textbf{Data Size in Bits per Job} & \textbf{Required Compute Cycles per Job} & \textbf{Average Number of Jobs Created per Interval} & \textbf{Interval Length in Seconds} \\
\midrule
1 & 587 168 & 1 690 694 & 10 & 0.01 \\
2 & 240 447 & 1 092 255 & 10 & 0.01 \\
3 & 257 396 & 867 139 & 10 & 0.01 \\
4 & 364 400 & 819 594 & 10 & 0.01 \\
5 & 387 953 & 3 463 247 & 10 & 0.01 \\
6 & 309 269 & 2 300 810 & 10 & 0.01 \\
7 & 44 420 & 1 129 119 & 10 & 0.01 \\
8 & 318 062 & 1 092 402 & 10 & 0.01 \\
9 & 490 880 & 1 044 736 & 10 & 0.01 \\
\bottomrule
\end{tabular}
\caption{User/Job Specifications}
\label{tab:task_data_limits}
\end{table}

\subsection{Synthetic Benchmark}
\begin{figure}
    \centering
    \includegraphics[width=0.3\textwidth]{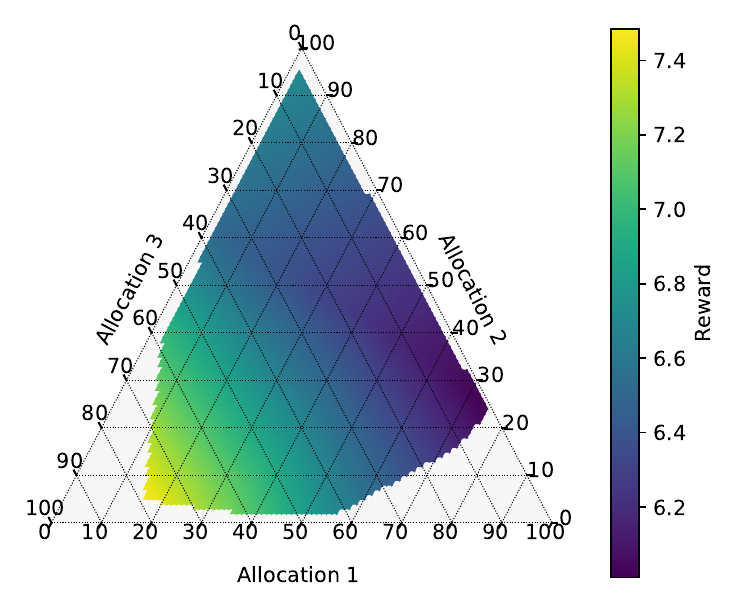}
    \caption{Example of the reward surface of our synthetic benchmark in three dimensions under constraints.}
    \label{fig:synth_env}
\end{figure}

In addition to these environments, we propose a synthetic benchmark. Its reward surface consists of an MLP with random weights. An example of the reward surface in three dimensions is visualized in Figure~\ref{fig:synth_env} Each episode has two states. Since it is completely deterministic, it provides a simple but effective way to benchmark approaches for constrained allocation tasks. In this setting, we create the constraints by randomly sampling 30 points and use their convex hull as the polytope defining the action space. We use a seven-dimensional setting with 611 constraints in our experiments.
More specifically, the network has one hidden layer and ReLU as a non-linearity. The input layer receives the state (as a number, i.e., 0 or 1) and the action as input and has an output size of 32, the hidden layer has an input size of 32 and output size of 16. The output layer has an input size 16 and a output size of 1. The exact initialization of the neural network weights can be found in our code (synth\_env.py: MLPRewardNetwork). To generate the environment we use the seed 1 in our experiments.

To generate the constraints, we sample 30 randomly from a Dirichlet distribution with concentration parameters set to 1. We then build the convex hull of these points and convert the resulting polytope into its halfspace representation, i.e., a system of linear inequalities which we use as constraints. We use the seed of 1 to generate the constraints. This results in 611 constraints. The algorithm to generate the constraints can be found in the code (random\_polytope\_generator.py)

\section{Architecture}
\begin{figure}[h]
\label{fig:architecture}
     \centering
    \includegraphics[width=0.98\textwidth]{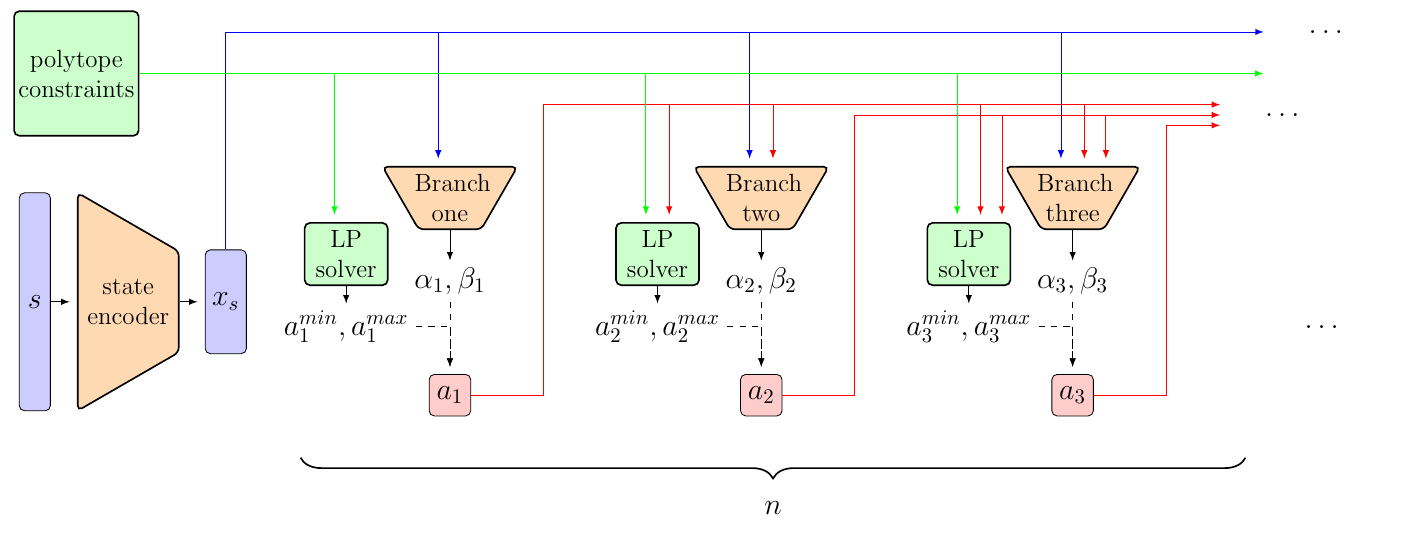}
\caption{Architecture of PASPO}
\end{figure}

\section{Hyperparameters/Training}

\begin{table}[h!]
    \centering
    \begin{tabular}{@{}p{3.5cm}ccccccc@{}}
        \toprule
        \textbf{Parameter} & \textbf{Ours} & \textbf{IPO} & \textbf{P3O} & \textbf{CUP} & \textbf{Lag.} & \textbf{OptLayer} & \textbf{CPO} \\ \midrule
        Training env steps        & \multicolumn{7}{c}{150,000 (synth, compute), 250,000 (portfolio optimization)} \\
        Episode/Rollout length            & \multicolumn{7}{c}{512 environment steps}                         \\
        Number of parallel envs           & \multicolumn{7}{c}{8}                                     \\
        Learning Rate                      & 1e-3          & 1e-3         & 1e-3         & 1e-3         & 1e-3          & 1e-3          & --          \\
        Gradient clipping                 & \multicolumn{7}{c}{2.0}                                   \\
        Minibatch size                    & \multicolumn{7}{c}{64}                                    \\
        Optimizer                         & \multicolumn{7}{c}{Adam}                                  \\
        GAE lambda                        & \multicolumn{7}{c}{0.95}    \\
        Discount factor                   & \multicolumn{7}{c}{1.0}    \\
        No. grad update it per epoch & \multicolumn{7}{c}{10 (CPO only for the critic 40)}                            \\
        PPO clip parameter                & 0.3           & 0.3        & 0.3        & 0.3        & 0.3         & 0.3          & --         \\

        Entropy coefficient               & 0.01          & 0.01        & 0.01        & 0.01       & 0.01         & 0.01         & --         \\
        Cost limit                        & --           & 1e-3      & 0.0        & 0.0        & 0.0         & 0.0         & 0.0         \\
        \bottomrule
    \end{tabular}
    \caption{The most important Parameters and Hyperparameters for Various Methods}
    \label{tab:params}
\end{table}

In Table~\ref{tab:params} we list the most important parameters and hyperparameters. The full configurations used can be found in the config files (yaml/hydra based) in our code (run configs directory). We tuned hyperparameters on our synthetic benchmark with five dimensions and five constraints.

We do not train using GPUs because of the small network sizes. We used an internal CPU cluster with consumer machines and servers ranging from 8 to 90 cores and RAM between 32GB and 512GB.

\section{Guaranteed Constraint Satisfaction}
\label{proof}
In the following, we proof that our approach PASPO always guarantees constraint satisfaction and that our method is able to sample all possible actions from the constrained action space.

\textbf{Definitions:} Let $A$ be the set of all actions that can be sampled with PASPO, and let $P=\{a \in \mathbb{R}^n| Ca\leq b\}$ be the convex polytope that corresponds to constrained action space. We define
$A^{(i+1)}_{*} = \{a\in \mathbb{R}^n | C^{(i+1)} a^{(i+1)}\leq b^{(i+1)}_*, \forall j=1,\ldots,i: a_j=a_j^* \}$
where 
\begin{math}
    b^{(i+1)}_{*} = b -\sum_{j=1}^i a_j^* \begin{bmatrix}
        c_{1j}\\
        \vdots\\
        c_{mj} 
    \end{bmatrix}
\end{math}
and $C^{(i+1)}$ and $a^{(i+1)}$ as defined in the paper.

Thus, $A^{(i+1)}_{*}$ is the restricted action space after sampling/fixing already the allocations $a_1^*,\ldots, a_i^*$.

\begin{theorem}
 Let $P=\{a \in \mathbb{R}^n| Ca\leq b\}\neq \emptyset$ be the convex polytope that corresponds to a constrained action space. Let $A$ be the set of all the points that can be generated by PASPO. It holds that $A=P$.
\end{theorem}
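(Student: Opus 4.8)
The plan is to prove the set equality $A = P$ by establishing both inclusions, with the entire argument resting on a single algebraic identity that validates the polytope-update rule of Equation~\ref{eq:updated_polytope}. Throughout I adopt the natural convention $A^{(1)}_* = P$, consistent with the paper's choice $C^{(1)} = C$, $b^{(1)} = b$.

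First I would prove the key lemma that fixing the first $i$ coordinates to values $a_1^*, \ldots, a_i^*$ inside $P$ yields exactly the restricted polytope $A^{(i+1)}_*$. Concretely, splitting the matrix--vector product as $Ca = \sum_{j=1}^i a_j^* C_{\cdot,j} + C^{(i+1)} a^{(i+1)}$ shows that $Ca \leq b$ is equivalent to $C^{(i+1)} a^{(i+1)} \leq b - \sum_{j=1}^i a_j^* C_{\cdot,j} = b^{(i+1)}_*$ whenever $a_j = a_j^*$ for all $j \leq i$. Hence $\{a \in P : a_j = a_j^*\ \forall j \leq i\} = A^{(i+1)}_*$, which confirms that the update rule exactly carries the already-fixed allocations over to the right-hand side without losing or introducing any constraint. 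Next I would characterize the sampling interval: since $P \subseteq [0,1]^n$ is bounded and the nonempty $A^{(i)}_*$ is contained in it, its image under projection onto the $i$-th coordinate is a closed bounded interval, and the two linear programs minimizing and maximizing $a_i$ over $A^{(i)}_*$ return precisely its endpoints $a_i^{min}$ and $a_i^{max}$. The consequence I will use repeatedly is that choosing any $a_i^* \in [a_i^{min}, a_i^{max}]$ preserves nonemptiness, i.e. $A^{(i)}_* \neq \emptyset \Rightarrow A^{(i+1)}_* \neq \emptyset$, because by definition of the interval there exists a feasible point of $A^{(i)}_*$ with $a_i = a_i^*$, whose restriction lies in $A^{(i+1)}_*$.

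With these two facts both inclusions follow by induction on $i$. For soundness ($A \subseteq P$): starting from $A^{(1)}_* = P \neq \emptyset$, every sampled $a_i^*$ lies in a nonempty interval and preserves nonemptiness, so after fixing $a_1^*, \ldots, a_{n-1}^*$ the simplex equality forces the unique remaining value $a_n^*$, and the key lemma guarantees that the resulting full vector satisfies $Ca^* \leq b$, i.e. $a^* \in P$. For completeness ($P \subseteq A$): given any target $a^* \in P$, at each step $a^*$ is itself a feasible completion of the already-fixed prefix, so $a_i^*$ lies in the projection interval $[a_i^{min}, a_i^{max}]$ and is therefore a permissible sample; running the process with these choices reproduces $a^*$.

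I expect the main obstacle to be the projection step: carefully arguing that the LP optima are finite and attained (which I would obtain from boundedness $P \subseteq [0,1]^n$ together with nonemptiness of $A^{(i)}_*$), and that the projection of a polytope onto a single coordinate is exactly the closed interval between these optima, so that ``feasible value of $a_i$'' and ``value in $[a_i^{min}, a_i^{max}]$'' coincide. The remaining bookkeeping---verifying that termination at $a_{n-1}$ with $a_n$ determined by the simplex constraint is consistent with its own feasible interval---is routine once the projection characterization and the nonemptiness-preservation property are established.
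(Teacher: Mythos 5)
Your proposal is correct and takes essentially the same route as the paper's own proof: an induction showing that every admissible sample $a_i^* \in [a_i^{\min}, a_i^{\max}]$ preserves nonemptiness of the restricted polytope, combined with the algebraic identity that the update rule is exactly equivalent to fixing coordinates in the original system $Ca \leq b$, from which both inclusions $A \subseteq P$ and $P \subseteq A$ follow. Your ``projection onto the $i$-th coordinate is a closed interval'' step is merely a repackaging of the paper's explicit convex-combination argument (the point $a^{\lambda} = \lambda a^{\downarrow} + (1-\lambda)a^{\uparrow}$ witnessing feasibility of any sampled value), so the two proofs coincide in substance.
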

\begin{proof}
Well-defined: Show that $A^{(n)}\neq \emptyset$ if $P\neq \emptyset$.

Induction over $i$:
\begin{align*}
    &i=1:&& A^{(1)}_* = \{a \in \mathbb{R}^n | C^{(1)}a^{(1)}\leq b^{(i+1)}_*\}=\{a \in \mathbb{R}^n|Ca\leq b\}=P\neq \emptyset\\
    &i \rightarrow i+1: &&\\
    &(i+1\leq n)&& A^{(i)}_* \neq \emptyset \Rightarrow \exists a^{\uparrow}, a^{\downarrow} \in A_*^{(i)}: a^{\uparrow}_i=a^{\min}_i, a^{\downarrow}_i=a^{\max}_i\\
    & & &\text{Now assume an arbitrary~} a_i^* \text{~ is sampled from~} [a^{\min}_i, a^{\max}_i]\\
    & & & \Rightarrow \exists \lambda \in [0,1]: a^*_i = (\underbrace{\lambda a^{\downarrow}+(1-\lambda)a^{\uparrow}}_{:=a^\lambda})_i\\
    & & & \text{By convexity of polytopes as solution spaces for linear inequality systems, we get:}\\
    & & &
    \begin{bmatrix}
        c_{1,i} & \cdots & c_{1,n} \\
        \vdots & \ddots & \vdots \\
        c_{m,i} & \cdots & c_{m,n} \\
    \end{bmatrix}
    \begin{bmatrix}
        a_{i}^{\lambda} \\
        \vdots \\
        a_{n}^{\lambda} \\
    \end{bmatrix} \leq 
    b-\sum_{j=1}^{i-1} a_j^* 
    \begin{bmatrix}
        c_{1j}\\
        \vdots\\
        c_{mj} 
    \end{bmatrix} \xLeftrightarrow[(a^\lambda_i=a^*_i)]{}
 \\
& & &
    \begin{bmatrix}
        c_{1,i+1} & \cdots & c_{1,n} \\
        \vdots & \ddots & \vdots \\
        c_{m,i+1} & \cdots & c_{m,n} \\
    \end{bmatrix}
    \begin{bmatrix}
        a_{i+1}^{\lambda} \\
        \vdots \\
        a_{n}^{\lambda} \\
    \end{bmatrix} \leq 
    b
    -\sum_{j=1}^{i} a_j^*
    \begin{bmatrix}
        c_{1j}\\
        \vdots\\
        c_{mj} 
    \end{bmatrix} \Rightarrow a^\lambda \in A^{(i+1)}_*
\end{align*}

To show that $A=P$:
\begin{align*}
    & A\subseteq P : && \text{Let~} a^* \in A. \text{ In the last step $(n)$, $a_n^*$ is sampled (by design) such that}\\
    & && C^{(n)}a_n^* \leq b-\sum_{j=1}^{n-1} a_j^* 
    \begin{bmatrix}
        c_{1j}\\
        \vdots\\
        c_{mj} 
    \end{bmatrix} \Leftrightarrow C a^* \leq b \Leftrightarrow a^* \in P\\
    & A\supseteq P : && \text{Let~} a^* \in P. \Leftrightarrow C a^* \leq b \Leftrightarrow C^{(i)}a^*\leq b^{(i)}~\forall i \Leftrightarrow a^* \in A_*^{(i)}~\forall i\\
    &&&\Rightarrow \text{~We can construct $a^*$ by sampling $a^*_i$ in every step $i$.} \Rightarrow a^* \in A
\end{align*}
\end{proof}

The intuition of why our approach can guarantee the satisfaction of constraints is based on three properties that we utilize:
(1) If $P_{i}$ is the set of solutions to an  system of linear inequalities, then by adding further constraints to the system of linear inequalities there will be a new set of solutions $P_{i+1}$ but always such that $P_{i+1} \subseteq P_{i}$.
(2) For any two points $a^{\min}$ and $a^{\max}$ in a convex set it can be implied that there exists a point $a^\lambda$ for which the following is true for its i-th dimension $\exists \lambda \in [0,1]: a^*_i = (\underbrace{\lambda a^{\min}+(1-\lambda)a^{\max}}_{:=a^\lambda})_i$
(3) Linear Programming can determine the upper and lower bounds for single variables in a system of linear inequalities, i.e. $[a_i^{\min}, a_i^{\max}] \forall i$. 

We start with the original system of linear inequalities with the solution space $P_{1} \neq \emptyset$ which is a convex polytope. We use (3) on $P_1$ to determine the upper and lower bounds for $a_1$, i.e. $[a_1^{\min}, a_1^{\max}]$. We sample a value $a^*_1$ from the range $[a_1^{\min}, a_1^{\max}]$. We know that the solution space $P_{1}$ must contain at least one point for which in its 1st dimension $a_1=a^*_1$ due to (2). In the next step we add the further constraint $a_1=a^*_1$ to the system of linear inequalities. This updated system of linear inequalities will have the solution space $P_{2}$. Due to (2) $P_{2}\neq \emptyset$, as well as $P_2 \subseteq P_{1}$. We then repeat the entire process and use (3) on $P_2$ to determine the upper and lower bounds for $a_2$, i.e. $[a_2^{\min}, a_2^{\max}]$....

After the n-th iteration $a^*_n$ will be determined and we then have completed the generation of point $a^*=(a^*_1, ..., a^*_n) \in P_n \subseteq ... \subseteq P_1$, i.e. we succeed generating a point $a^*$ that satisfies all original constraints $P_1$.

\section{The Impact of the Allocation Order}
\begin{figure}[b]
    \centering
    \includegraphics[width=0.5\linewidth]{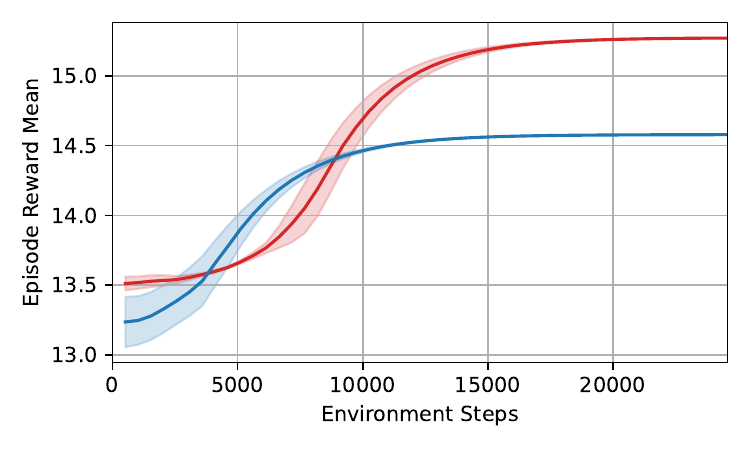}
    \caption{The impact of the allocation order on PASPO without de-biased initialization in the synthetic benchmark with two states, a 7-dimensional action space, and no additional allocation constraints. Blue depicts the standard allocation order (i.e., $e_1, e_2, \ldots, e_n$) and red depicts the reversed allocation order (i.e., the entities are allocated in the reversed order). A significant difference in performance can be observed with respect to the order without our de-biased initialization. In contrast, Figure 5b in the paper shows that with the de-biased initialization the difference is not significant. }
    \label{fig:order_without_init}
\end{figure}

As already discussed in the ablations in the main paper, with our de-biased initialization the impact of the allocation order is small. However, without our de-biased initialization, the order of the allocation has a significant impact on the performance, as illustrated in Figure~\ref{fig:order_without_init}.

\end{document}